\documentclass[11pt]{article}
\usepackage[T1]{fontenc}

\usepackage[margin=1in]{geometry}
\usepackage{placeins}
\usepackage{subcaption}

\usepackage{graphicx}
\usepackage{authblk}
\usepackage{color}
%
\usepackage{doi}
\usepackage{amsmath, amssymb, amsthm} 
\newtheorem{remark}{Remark}
\usepackage{caption}   
\usepackage{hyperref}
\usepackage{tikz}
\usepackage{braket}
\usetikzlibrary{arrows.meta,backgrounds}

\pgfdeclarelayer{nodelayer}
\pgfdeclarelayer{edgelayer}
\pgfsetlayers{background,main,nodelayer,edgelayer}

\newtheorem{definition}{Definition}
\newtheorem{proposition}{Proposition}
\newtheorem{corollary}{Corollary}
\newtheorem{theorem}{Theorem}
\begin{document}
\title{Analysis of Dirichlet Energies as Over-smoothing Measures}

\author{
  Anna Bison and Alessandro Sperduti
}
\date{}
\affil{Department of Mathematics ``Tullio Levi-Civita''\\ University of Padova, Padova, Italy \\ 

\texttt{anna.bison@phd.unipd.it, alessandro.sperduti@unipd.it}}
\maketitle             
\begin{abstract} We analyze the distinctions between two functionals often used as over-smoothing measures: the Dirichlet energies induced by the unnormalized graph Laplacian and the normalized graph Laplacian. We demonstrate that the latter fails to satisfy the axiomatic definition of a node-similarity measure proposed by Rusch \textit{et al.} By formalizing fundamental spectral properties of these two definitions, we highlight critical distinctions necessary to select the metric that is spectrally compatible with the GNN architecture, thereby resolving ambiguities in monitoring the dynamics. \end{abstract}

\newcommand{\keywords}[1]{\noindent\textbf{Keywords:} #1}
\keywords{GNNs $\cdot$ Over-smoothing $\cdot$ Dirichlet Energy}

\section{Introduction}
One of the most analyzed problems in Graph Neural Networks (GNNs) is over-smoothing, that is usually described as the exponential convergence of node embeddings to a common vector through the GNN layers. One of the more frequently used metrics to analyze both theoretically and empirically over-smoothing is the Dirichlet energy, that is induced by the graph Laplacian, with different possibilities as analyzed in the next section. A formal axiomatic definition of over-smoothing, based on the definition of a "total over-smoothing" state where all node embeddings are identical, has been proposed in \cite{Rusch2023ASO}. A key axiom of the proposal is that a smoothness measure should be zero if and only if this state is reached. In this paper, we point out that the widely-used Dirichlet Energy induced by the \textit{normalized} graph Laplacian does not satisfy this axiom. Recently, some other issues in adopting Dirichlet energies in order to measure over-smoothing were pointed out in \cite{zhang2025rethinkingoversmoothinggraphneural}, where it is explained that Dirichlet energy induced by the normalized Laplacian tends to zero when node embeddings tend to its dominant eigenvector $\vec{v}$ s.t. $v_i = \sqrt{d_i +1}$, with $d_i$ the degree of the $i$-th node. The difference between our work and \cite{zhang2025rethinkingoversmoothinggraphneural} is that while Zhang \textit{et al.} also use this convergence to argue that the Dirichlet metric is too strict as a definition (motivating their rank-based proposal), they do not focus on the fact that this represents a formal violation of the \cite{Rusch2023ASO} axiom. We, instead, center our analysis on this specific axiom violation. Our primary objective is to demonstrate that the diagnostic validity of Dirichlet energy is not absolute, but strictly conditional on its consistency with the operator governing the GNN dynamics. We use the distinction between the standard Graph Laplacian, $\Delta$, and its normalized counterpart, $\Delta_{\text{norm}}$, as a paradigmatic example of this broader principle.
While often treated as interchangeable, we show that measuring normalized dynamics with unnormalized metrics leads to a breakdown in detection. In this incompatible regime, the standard energy fails to capture genuine over-smoothing (convergence to the normalized kernel); conversely, observing a vanishing energy becomes a misleading indicator: since the dynamics do not drive the signal to the metric's kernel, such decay strictly implies the collapse of embedding norms (over-shrinking), rather than geometric alignment.
A pertinent example is found in recent work \cite{arroyo2025vanishinggradientsoversmoothingoversquashing}, where the analysis of normalized models using the standard energy leads to the conclusion that over-smoothing is driven by contractive layers. As we argue, this is an artifact of the metric mismatch. While Johnson and Zhang \cite{DBLP:journals/corr/abs-1810-00826} previously highlighted the bias introduced by normalization in standard semi-supervised learning, our work investigates the spectral consequences of this discrepancy specifically within the GNN framework. We analyze how the non-commutativity between the filter and the target metric generates spectral dispersion, thereby invalidating standard approaches to monitoring over-smoothing.

\section{Notation}
We denote vectors with arrows $\vec{v}$ and their $i$-th entry with $v_i$. We denote the standard basis vectors as $e_i$ and the all-ones vector as $\mathbf{1} = \sum_{i=1}^N e_i$, with $N$ the dimension of the vector space. We denote the standard inner product between two vectors $\vec{u}, \vec{v}$  with the Bra-ket notation as $\braket{u|v}$. In addition, we denote by $\delta_{ij}$ the Kronecker delta, defined as $\delta_{ij}=1$ if $i=j$ and $0$ otherwise.
We denote matrices with uppercase letters (e.g., $X$), the vector associated to their $i$-th row with $X_i^T$ and their entries with $X_{ij}$. Throughout this paper, we consider $\mathcal{G} = (\mathcal{V}, \mathcal{E})$ to be an undirected and connected\footnote{This assumption simplifies the spectral analysis by ensuring that the eigenvalue $\lambda_0 = 0$ of the Laplacian has multiplicity 1.} graph, we denote with $\mathcal{V}$ the set of nodes, where $\mathcal{V}=\{1, \dots, |\mathcal{V}|\}$ and with $ \mathcal{E} \subseteq \mathcal{V} \times \mathcal{V}$ the set of edges. We indicate an edge between node $i$ and $j$ as $(i, j) \in  \mathcal{E}$.
The graph is associated with a node feature matrix $X \in \mathbb{R}^{|\mathcal{V}| \times m}$, where the $i$-th row represents the feature vector of node $i$, and the $j$-th column represents a scalar graph signal corresponding to the $j$-th feature.
Let $A \in \{0,1\}^{|\mathcal{V}| \times |\mathcal{V}|}$ be the adjacency matrix, where $A_{ij}=1$ if $(i, j) \in \mathcal{E}$ and $0$ otherwise, and let $D = \text{diag}(d_1, \dots, d_{|\mathcal{V}|})$ be the degree matrix. We denote a constant scalar signal of value $1$ for all the nodes with $\mathbf{1} \in \mathbb{R}^{|\mathcal{V}|}$ and the set of neighbors of node $i$ as $\mathcal{N}_i$.
Finally, we define the unnormalized Laplacian as $\Delta = D - A$ and the symmetric normalized Laplacian as $\Delta_{\text{norm}} = I - D^{-1/2}AD^{-1/2}$.

\section{Axiomatic Formulation}

Consider the definition of node-similarity given in Rusch \textit{et al.} \cite{Rusch2023ASO}:

\begin{definition}{\textbf{Over-smoothing} \cite{Rusch2023ASO}}
    \\Let \( \mathcal{G} \) be an undirected, connected graph and let \( X^{(n )}\in \mathbb{R}^{|\mathcal{V}| \times m} \) denote the \( n \)-th layer hidden features of an \( N \)-layer GNN defined on \( \mathcal{G} \). Moreover, we call \( \mu : \mathbb{R}^{|\mathcal{V}| \times m} \to \mathbb{R}_{\geq 0} \) a node-similarity measure if it satisfies the following axioms:  
\begin{enumerate}

 \item  \( \exists \; \vec{c} \in \mathbb{R}^m \) with \( X_i^T = \vec{c} \) for all nodes \( i \in V \) \( \Leftrightarrow \) \( \mu(X) = 0 \), for \( X \in \mathbb{R}^{|\mathcal{V}| \times m} \)  
 \item  \( \mu(X + Y) \leq \mu(X) + \mu(Y) \), for all \( X, Y \in \mathbb{R}^{|\mathcal{V}| \times m} \)  \\
 
Over-smoothing with respect to \( \mu \) is then defined as the layer-wise exponential convergence of the node-similarity measure \( \mu \) to zero, i.e., 

 \item  \( \mu(X^{(n )}) \leq C_1 e^{-C_2 n}, \) for \( n = 0, \dots, N \) with some constants \( C_1, C_2 > 0 \).
 \end{enumerate}
\end{definition}
One example of the most common over-smoothing measure used in literature is the Dirichlet energy. 
It can be defined as a function of the Laplacian, and depending on the definition of the Laplacian chosen, it can happen that it could not induce a node-similarity measure with respect to the previous definition. 
\begin{definition}
Let $\mathcal{G}(\mathcal{V}, \mathcal{E})$ be a graph and $X \in \mathbb{R}^{|\mathcal{V}| \times m}$ be the matrix of the node embeddings associated to $\mathcal{G}$.
The following definitions of Dirichlet energy are considered:
    \begin{itemize}
        \item from \cite{Rusch2023ASO}, induced by $\Delta$
            \[
             \mathcal{E}_{\Delta}(X) = \frac{1}{|\mathcal{V}|}\text{tr}(X^{\top}\Delta X)
            \]
            \[
=\frac{1}{|\mathcal{V}|}\sum_{i \in \mathcal{V}}\sum_{j \in \mathcal{N}_i}  \left\| X_i^T - X_j^T \right\|^2_2 ;
            \]
        \item induced by $\Delta_{\text{norm}}$
            \[
             \mathcal{E}_{\Delta_{\text{norm}}}(X) = \text{tr}(X^{\top}\Delta_{\text{norm}} X)
            \]
            \[
= \sum_{i \in \mathcal{V}}\sum_{j \in \mathcal{N}_i}  \left\| \frac{X_i^T}{\sqrt{ d_i}} - \frac{X_j^T}{\sqrt{ d_j}} \right\|^2_2 ;
\]
            
        \item induced by
            $\tilde{\Delta}_{\text{norm}}:=I - (D +I)^{-1/2}(A+I)(D +I)^{-1/2}$, version with self-loops considered in \cite{Cai2020ANO}, induced by the renormalization trick introduced in \cite{kipf2017semisupervisedclassificationgraphconvolutional}
            \[
             \mathcal{E}_{\tilde{\Delta}_{\text{norm}}}(X) = \text{tr}(X^{\top}\tilde{\Delta}_{\text{norm}} X)
            \]
            \[
= \sum_{i \in \mathcal{V}}\sum_{j \in \mathcal{N}_i}  \left\| \frac{X_i^T}{\sqrt{1 + d_i}} - \frac{X_j^T}{\sqrt{1 + d_j}} \right\|^2_2 .
\]

    \end{itemize}
\end{definition}
\begin{remark}
Notice that, in the expression of $\mathcal{E}_{\Delta}$, the constant $\frac{1}{|\mathcal{V}|}$ was added in order to express the definition of $\mathcal{E}$ reported in \cite{Rusch2023ASO}. In subsequent analyses, this constant will be excluded because it does not impact the results for the intended applications and is typically introduced to stabilize the value ranges of Dirichlet energy, allowing for the comparison of energies across graphs with varying node counts.
\end{remark}
As reported in \cite{Rusch2023ASO}, $\sqrt{\mathcal{E}_{\Delta}}$ satisfies the definition of a node-similarity measure. Nevertheless, there are conditions of exponential convergence of the node embeddings to a subspace of dimension $1$ that, according to the spectral concept of smoothing, should be considered conditions of over-smoothing, that instead the Definition 1 does not take into account, and for which, accordingly, $\sqrt{\mathcal{E}_{\Delta}}$ is not a precise metric. One example is the exponential convergence of node embeddings to the kernel of the normalized (or analogously augmented with self-loops and normalized) Laplacian, for which the eigenspaces are different w.r.t. the ones of $\Delta$, and for which the best-suited metric should be the normalized Dirichlet energy $\mathcal{E}_{\Delta_{\text{norm}}}$ (or respectively, $\mathcal{E}_{\tilde{\Delta}_{\text{norm}}}$).\\
Therefore, it seems worth pointing out that $\mathcal{E}_{\Delta_{\text{norm}}}$ does not satisfy the first axiom of node-similarity measure. Indeed, when a graph is \textbf{not regular}\footnote{If a graph is regular, both $\mathcal{E}_{\Delta_{\text{norm}}}$ and $\mathcal{E}_{\tilde{\Delta}_{\text{norm}}}$ are proportional to $\mathcal{E}_{\Delta}$, hence they satisfy the first axiom.}, the required implication does not hold: \[
\forall \; i \in \mathcal{V}  \;  X_i^T = \vec{c} \Rightarrow 
\mathcal{E}_{\Delta_{\text{norm}}}(X) =  \sum_{i \in \mathcal{V}}\sum_{j \in \mathcal{N}_i}  \left\| \frac{\vec{c}}{\sqrt{ d_i}} - \frac{\vec{c}}{\sqrt{ d_j}} \right\|^2_2
\]
\[
= \sum_{i \in \mathcal{V}}\sum_{j \in \mathcal{N}_i}  \|\vec{c}\|^2_2 \left( \frac{1}{\sqrt{ d_i}} - \frac{1}{\sqrt{ d_j}} \right)^2 > 0 \quad \text{if} \; \exists \; \{i, j\} \in \mathcal{E} \; | \; d_i \neq d_j .
\]
Hence, Dirichlet energy induced by the normalized Laplacian does not constitute a node-similarity measure, in particular it's not a measure of over-smoothing according to the definition given above.
An analogous result holds for $\mathcal{E}_{\tilde{\Delta}_{\text{norm}}}$.\\

An objection might be tentatively based on the idea that, as noted by the paper asserting the axiomatic definition, the first axiom holds up to a constant, meaning that a node similarity measure could satisfy the double implication converging to any other constant different than zero. 
Thus, we could consider the possibility of reformulating a node-similarity measure by recasting $\mathcal{E}_{\Delta_{\text{norm}}}$ up to a constant, in a certain manner. But this would not solve the issue, since considering for example a constant scalar signal \( c \mathbf{1}\in \mathbb{R}^{|\mathcal{V}|}, \; c \in \mathbb{R} \), the energy value would be different depending on $c$, and the definition asks for the convergence to a unique constant for all the constant signals independently from their value.\\
Indeed, if a node similarity measure would be allowed to assume different constant values depending on $c$ then, considering the $n$-th layer hidden features of an $N$-layer GNN applied to $\mathcal{G}$, and considering in input a scalar signal $X^{(0)}_i=c_0$ for all nodes $i$, in general it could be $\mu(X^{(0)})\neq \mu(X^{(n)})$, even if for each iteration it would be $X^{(n)}_i=c_n$ for all nodes $i$ (possibly different from $c_0$ depending on the function implemented by the GNN layer). But this would make it completely vacuous the notion of convergence of the node-similarity measure through the $N$ iterations. That is, such a node similarity would not be able in general to measure the over-smoothing defined in the Definition 1. Hence, the previous would not be a good objection, and then Definition 1 is formally excluding $\mathcal{E}_{\Delta_{\text{norm}}}$ as an over-smoothing metric.\\ 

The two Laplacians behave differently w.r.t. the first axiom because the kernel of the normalized Laplacian does not coincide in general with the kernel of the standard Laplacian, which is the set of smooth signals proportional to $\mathbf{1} \in \mathbb{R}^{|\mathcal{V}|}$ that is independent by the topology of the graph.\\
The kernel of $\Delta_{\text{norm}}$ is instead generated by $D^{1/2}\mathbf{1}$, that depending on $D^{1/2}$, and then on the specific graph, is spread along different eigenspaces of $\Delta$. \\

A formalization of the above discussion is given in the following in the context of the GNNs domain. In particular, all the obtained formal results are consequences of the following straightforward mathematical fact:

\begin{theorem} In general, $\Delta_{\text{norm}}$ and $\Delta$ do not commute, and therefore they are not simultaneously diagonalizable.
\end{theorem}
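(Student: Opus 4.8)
The plan is to reduce the statement to the standard linear-algebra fact that simultaneously diagonalizable matrices must commute, and then to defeat simultaneous diagonalizability by exhibiting a single non-commuting instance. First I would record that both $\Delta = D-A$ and $\Delta_{\text{norm}} = I - D^{-1/2}AD^{-1/2}$ are real symmetric, hence each is individually orthogonally diagonalizable. If they were \emph{simultaneously} diagonalizable there would exist one invertible $P$ with $P^{-1}\Delta P$ and $P^{-1}\Delta_{\text{norm}}P$ both diagonal; since diagonal matrices commute, conjugating back would force $\Delta\Delta_{\text{norm}} = \Delta_{\text{norm}}\Delta$. Thus it suffices to show that the commutator $[\Delta,\Delta_{\text{norm}}] := \Delta\Delta_{\text{norm}} - \Delta_{\text{norm}}\Delta$ is nonzero for at least one graph, which is precisely what the qualifier ``in general'' asks for.

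Next I would locate where non-commutation can occur. On a $d$-regular graph $D = dI$, so $\Delta_{\text{norm}} = I - d^{-1}A = d^{-1}(dI - A) = d^{-1}\Delta$; proportional operators always commute, so any witness must be \emph{non-regular} (this is exactly the boundary case flagged in the footnote before the statement). The smallest viable witness is a graph with two adjacent vertices of different degree, so I would take the path $P_3$ with vertices $1,2,3$ and edges $\{1,2\},\{2,3\}$, giving degrees $d_1 = d_3 = 1$, $d_2 = 2$. Writing out the two $3\times 3$ matrices and multiplying, the relevant off-diagonal entry is $[\Delta,\Delta_{\text{norm}}]_{12} = \bigl(-\tfrac{1}{\sqrt{2}}-1\bigr) - \bigl(-1-\sqrt{2}\bigr) = \sqrt{2} - \tfrac{1}{\sqrt{2}} = \tfrac{1}{\sqrt{2}} \neq 0$, so the operators fail to commute on $P_3$. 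Combined with the reduction above, $\Delta$ and $\Delta_{\text{norm}}$ are therefore not simultaneously diagonalizable in general.

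A structural alternative, more in keeping with the kernel discussion preceding the statement, is available and I would mention it. If the two operators commuted, every eigenspace of $\Delta_{\text{norm}}$ would be $\Delta$-invariant; since the graph is connected, $\ker\Delta_{\text{norm}} = \operatorname{span}(D^{1/2}\mathbf{1})$ is one-dimensional, so $D^{1/2}\mathbf{1}$ would have to be an eigenvector of $\Delta$ as well. Computing the $i$-th component gives $(\Delta D^{1/2}\mathbf{1})_i = d_i^{3/2} - \sum_{j\in\mathcal{N}_i}\sqrt{d_j}$, and demanding proportionality to $\sqrt{d_i}$ for every $i$ imposes the rigid condition $d_i - \sum_{j\in\mathcal{N}_i}\sqrt{d_j/d_i} = \lambda$ uniformly in $i$; on $P_3$ this reads $1-\sqrt{2}$ at the leaves but $2-\sqrt{2}$ at the center, a contradiction. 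This recovers the same conclusion while making transparent that the discrepancy is driven by the degree inhomogeneity separating $\ker\Delta = \operatorname{span}(\mathbf{1})$ from $\ker\Delta_{\text{norm}}$.

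As the statement is labelled straightforward, I expect no deep obstacle; the one point requiring care is conceptual rather than technical. Because the claim is inherently conditional, the argument cannot be a uniform identity: it must both isolate the degenerate regime where commutation genuinely holds (the regular graphs, where $\Delta_{\text{norm}}\propto\Delta$) and certify one explicit non-regular witness. The only bookkeeping risk is the arithmetic of the $D^{-1/2}AD^{-1/2}$ products, and selecting the minimal example $P_3$ confines this to a single $3\times 3$ computation whose asymmetry is already visible in one entry.
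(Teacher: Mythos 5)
Your proposal is correct, but its primary route differs from the paper's. The paper does not compute an explicit numerical witness inside the proof: it evaluates the commutator on the all-ones vector, using $\Delta\mathbf{1}=\vec{0}$ to reduce $[\Delta_{\text{norm}},\Delta]\mathbf{1}$ to $-\Delta\Delta_{\text{norm}}\mathbf{1}$, and then argues that $\Delta_{\text{norm}}\mathbf{1}$, whose $i$-th entry is $1-\sum_{j\in\mathcal{N}_i}(d_i d_j)^{-1/2}$, is in general not a constant vector and hence not in $\ker\Delta=\operatorname{span}(\mathbf{1})$. In other words, the paper shows that $\ker\Delta$ fails to be $\Delta_{\text{norm}}$-invariant; your ``structural alternative'' is exactly the mirror image of this argument, run on $\ker\Delta_{\text{norm}}=\operatorname{span}(D^{1/2}\mathbf{1})$ instead of $\ker\Delta$, so that part is essentially the paper's proof in dual form. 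Your main argument --- the explicit $P_3$ computation, which checks out ($[\Delta,\Delta_{\text{norm}}]_{12}=\sqrt{2}-\tfrac{1}{\sqrt{2}}=\tfrac{1}{\sqrt{2}}\neq 0$) --- is a genuinely different and in one respect stronger move: it discharges the ``in general'' qualifier with a concrete certified witness, whereas the paper leaves its genericity step (``the entries differ for distinct nodes in general'') informal, supplying an explicit four-node example only after the proof and only for the auxiliary claim $[A,D]\neq 0$. What the paper's route buys is mechanism rather than certification: it exhibits the non-commutativity as kernel misalignment, precisely the theme that Proposition 1 and the superposition-matrix discussion later build on, while your $3\times 3$ computation proves the theorem but hides that structure. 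Both arguments correctly identify regular graphs as the degenerate regime where commutation holds, and both rest on the same standard reduction that simultaneous diagonalizability forces commutation.
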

\begin{proof}
To show that the two operator do not commute, it has to be shown that $[\Delta_{\text{norm}}, \Delta] = \Delta_{\text{norm}}\Delta - \Delta\Delta_{\text{norm}}\neq 0_{|\mathcal{V}|\times |\mathcal{V}|}$. Hence, it's sufficient to show that $\exists$ $\vec{v} \in \mathbb{R}^{|\mathcal{V}|} \setminus \{\vec{0} \}$ s.t. $[\Delta_{\text{norm}}, \Delta]\vec{v} \neq \vec{0}$. It's straightforward to see that $\mathbf{1}$ satisfies that condition (notice that $\Delta \mathbf{1}=diag(A\mathbf{1})\mathbf{1}-A\mathbf{1}= \vec{0} $):
    \[
[\Delta_{\text{norm}}, \Delta]  \mathbf{1} =  \Delta_{\text{norm}}\Delta \mathbf{1} -\Delta\Delta_{\text{norm}}\mathbf{1} =-\Delta \Delta_{\text{norm}}\mathbf{1} 
\]
\[
=\Delta \sum_{i = 1}^{|\mathcal{V}|} \biggl(- 1+\sum_{j \in \mathcal{N}_i}\frac{1}{\sqrt{d_i d_j}} \biggr) e_i \neq \vec{0} 
\]
since $\Delta \vec{v} = \vec{0} \iff \exists \; c \in \mathbb{R} \; | \; \vec{v}=c \mathbf{1}$, and in general \\ $(- 1+\sum_{j \in \mathcal{N}_i}\frac{1}{\sqrt{d_i d_j}} ) \neq (- 1+\sum_{j \in \mathcal{N}_k}\frac{1}{\sqrt{d_k d_j}}  )$ for $k \in \mathcal{V} \setminus \{i\}$.
\end{proof}
The non-commutativity between the two Laplacians arises from the non-commutativity between $D$ and $A$. By linearity and the Leibniz rule, we have:
\[
[\Delta_{\text{norm}}, \Delta] =[I-D^{-1/2}AD^{-1/2}, D-A]=-[D^{-1/2}AD^{-1/2}, D] + [D^{-1/2}AD^{-1/2},A]
\]
\[
=-D^{-1/2}[A,D]D^{-1/2}+[D^{-1/2},A]AD^{-1/2}+D^{-1/2}A[D^{-1/2},A]
\]
which generally is $\neq 0_{|\mathcal{V}| \times |\mathcal{V}|}$, since $[D,A] = 0_{|\mathcal{V}| \times |\mathcal{V}|} \iff [D^{-1/2}, A]=0_{|\mathcal{V}| \times |\mathcal{V}|}$. A straightforward example of $[A, D] \neq  0_{|\mathcal{V}| \times |\mathcal{V}|}$ is given by a four-node graph consisting of a triangle and an additional edge connecting the fourth node to one of the triangle's vertices:
    \[
A=\begin{pmatrix}
0 & 1 & 1 & 1\\
1 & 0 & 1 & 0\\
1 & 1 & 0 & 0\\
1 & 0 & 0 & 0
\end{pmatrix},\qquad
D=\begin{pmatrix}
3 & 0 & 0 & 0\\
0 & 2 & 0 & 0\\
0 & 0 & 2 & 0\\
0 & 0 & 0 & 1
\end{pmatrix}
\]

\[
AD=\begin{pmatrix}
0 & 2 & 2 & 1\\
3 & 0 & 2 & 0\\
3 & 2 & 0 & 0\\
3 & 0 & 0 & 0
\end{pmatrix} \neq \begin{pmatrix}
0 & 3 & 3 & 3\\
2 & 0 & 2 & 0\\
2 & 2 & 0 & 0\\
1 & 0 & 0 & 0
\end{pmatrix}= DA 
\]
Thus, $A$ and $D$ do not commute.

    A specific set of graphs for which $[A, D]=0_{|\mathcal{V}| \times |\mathcal{V}|}$ is regular graphs s.t. $D= dI$, since $[I,A] = 0_{|\mathcal{V}| \times |\mathcal{V}|}  \; \forall A$. Indeed, as shown previously, for regular graphs both $\Delta$ and $\Delta_\text{norm}$ induce a Dirichlet energy that satisfies the axiom of node similarity measure. \\
    
One effect of the previous theorem is the following:
\begin{proposition}
Let \( \mathcal{G} \) be an undirected, connected and \textbf{non-regular} graph and let \( X^{}\in \mathbb{R}^{|\mathcal{V}| \times m} \) denote the hidden features of the signal defined in \( \mathcal{G} \).\\
Let $M_{\mu} \in \mathbb{C}^{|\mathcal{V}| \times |\mathcal{V}|}$ be any operator that induces a node-similarity measure in the following way 
\[\mu(X):= \text{tr}(X^{\top}M_{\mu}X) \geq 0.\] 
Then
\[
 D^{-1/2}M_{\mu}D^{-1/2}
\]
cannot induce a node-similarity measure.
\end{proposition}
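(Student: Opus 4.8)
The plan is to show that the induced quadratic form $\mu'(X) := \operatorname{tr}(X^\top D^{-1/2} M_\mu D^{-1/2} X)$ violates the first axiom of Definition~1, by computing its kernel and comparing it with $\operatorname{span}(\mathbf{1})$.

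First I would reduce everything to a real symmetric positive-semidefinite matrix. Since $X$ is real, each scalar $\vec{x}^\top M_\mu \vec{x}$ equals $\vec{x}^\top S \vec{x}$ with $S := \tfrac12(M_\mu + M_\mu^\top)$, so only the symmetric part of $M_\mu$ contributes to $\mu$. Moreover, the hypothesis $\mu(X)\ge 0$ for all real $X$ forces $S$ to be real symmetric and positive semidefinite: a nonzero imaginary part would make $\vec{x}^\top S\vec{x}$ non-real on some real $\vec{x}$, and an indefinite part would make it negative. Because $D^{-1/2}$ is a real symmetric diagonal matrix, the symmetric part of $D^{-1/2}M_\mu D^{-1/2}$ is exactly $D^{-1/2}S D^{-1/2}$, again real symmetric and PSD, so I may work throughout with $S$ and its congruence $D^{-1/2}SD^{-1/2}$.

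Next I would characterize Axiom~1 spectrally. For a PSD matrix $S$ one has $\operatorname{tr}(X^\top S X) = \sum_j \lVert S^{1/2}\vec{x}_j\rVert_2^2$, where $\vec{x}_j$ is the $j$-th column of $X$, so $\mu(X)=0$ iff every column of $X$ lies in $\ker S$. Since $\mu$ is a node-similarity measure by hypothesis, Axiom~1 applied column-by-column (in particular to single-column signals, $m=1$) yields the kernel identity $\ker S = \operatorname{span}(\mathbf{1})$. I would then compute the kernel of the congruence: as $D^{-1/2}$ is invertible, $\vec{v}\in\ker(D^{-1/2}SD^{-1/2})$ iff $D^{-1/2}\vec{v}\in\ker S=\operatorname{span}(\mathbf{1})$, i.e. iff $\vec{v}\in\operatorname{span}(D^{1/2}\mathbf{1})$, where $D^{1/2}\mathbf{1}$ is the vector with entries $\sqrt{d_i}$. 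Finally I would invoke non-regularity: there exist $i,k$ with $d_i\neq d_k$, so $D^{1/2}\mathbf{1}$ is not proportional to $\mathbf{1}$, whence $\ker(D^{-1/2}SD^{-1/2}) = \operatorname{span}(D^{1/2}\mathbf{1}) \neq \operatorname{span}(\mathbf{1})$. This breaks Axiom~1 in both directions: the constant signal $X=\mathbf{1}$ has identical rows yet gives $\mu'(\mathbf{1})=(D^{-1/2}\mathbf{1})^\top S (D^{-1/2}\mathbf{1})>0$, while the non-constant signal $X=D^{1/2}\mathbf{1}$ gives $\mu'(X)=\mathbf{1}^\top S\mathbf{1}=0$; either failure suffices.

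I expect the main obstacle to be the reduction step together with the spectral characterization of Axiom~1, rather than the kernel computation. One must argue carefully that ``$\mu$ induces a node-similarity measure'' pins down $\ker S$ exactly: this relies on PSD-ness (so the form vanishes precisely on the kernel) and on the fact that Axiom~1 must hold for \emph{every} $X$, including single-column signals, in order to upgrade the column-wise statement to the equality $\ker S = \operatorname{span}(\mathbf{1})$. Once this is established, the congruence-transformed kernel and the appeal to non-regularity are routine.
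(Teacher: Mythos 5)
Your proof is correct, but it is organized differently from the paper's. The paper argues by a direct double reading of a single evaluation: for the scalar signal $X = D^{-1/2}c\mathbf{1}$ (i.e.\ $X_i = c/\sqrt{d_i}$), Axiom 1 applied to $\mu$ together with non-regularity forces $\mu(X) > 0$, while the identity $\mu(D^{-1/2}c\mathbf{1}) = \operatorname{tr}\bigl((c\mathbf{1})^{\top} D^{-1/2}M_{\mu}D^{-1/2}(c\mathbf{1})\bigr) = \mu'(c\mathbf{1})$ shows that the candidate measure $\mu'$ is strictly positive on a constant signal, contradicting Axiom 1 for $\mu'$; no structure on $M_{\mu}$ beyond the stated hypothesis is used. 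Your argument contains this same core computation (your violation at $X = \mathbf{1}$ is exactly the paper's), but you reach it through a different key lemma: the reduction of $M_{\mu}$ to its real symmetric positive-semidefinite part $S$, the identification $\ker S = \operatorname{span}(\mathbf{1})$ from Axiom 1, and the congruence computation $\ker(D^{-1/2}SD^{-1/2}) = \operatorname{span}(D^{1/2}\mathbf{1})$. This buys you two things the paper's proof does not deliver: (i) a violation of Axiom 1 in \emph{both} directions (the paper exhibits only a constant signal with positive energy, not also a non-constant signal with zero energy); and (ii) a rigorous version of the kernel comparison that the paper relegates to its post-proof remark, where it is justified by appeal to non-commutativity --- an inference that is not airtight on its own, since two non-commuting PSD operators can in principle share a kernel vector, whereas your congruence identity $\ker(D^{-1/2}SD^{-1/2}) = D^{1/2}\ker S$ is exact. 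The price is the extra machinery (symmetrization, PSD-ness, the spectral characterization of when the form vanishes), which the paper's more elementary route avoids entirely.
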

\begin{proof}
 Consider a scalar signal $X_i= \frac{c}{\sqrt{d_i}}\;, c \in \mathbb{R}\;, \forall \; i \in \mathcal{V}$, then $X$ can be rewritten as $D^{-1/2 }c\mathbf{1} \in \mathbb{R}^{|\mathcal{V}| }$. Then, by the first axiom of node-similarity measure, $\mu$ must be strictly positive in $X$, since $X_i$ is not the same constant scalar $c$ for all nodes $i$. In particular:
 \[
\mu(X)= \mu(D^{-1/2}c\mathbf{1}) = \text{tr}(c\mathbf{1}^{\top} D^{-1/2}M_{\mu}D^{-1/2}c\mathbf{1}) = \mu'(c\mathbf{1})>0 .
 \]
 i.e., the previous expression has a double read: it can be considered also as a different node similarity measure $\mu'$, induced by $D^{-1/2}M_{\mu}D^{-1/2}$, but applied to $c\mathbf{1}$, that instead is a constant signal through all the nodes $i$. That is, the node-similarity measure $\mu'$ induced by $D^{-1/2}M_{\mu}D^{-1/2}$ should have taken in $c\mathbf{1}$ the same value that $\mu$ takes in $D^{-1/2}c\mathbf{1}$.\\
 But if $D^{-1/2}M_{\mu}D^{-1/2}$ could have induced such a node-similarity measure, then it should have satisfied the first axiom. 
 Hence, it should have been that, considering the null signal $X_i= 0 \; \forall \; i \in \mathcal{V}$:
 \[
\text{tr}(\vec{0}^{\top} D^{-1/2}M_{\mu}D^{-1/2}\vec{0}) = 0 .
\]
 In particular, it would have meant that the node-similarity measure induced by $D^{-1/2}M_{\mu}D^{-1/2}$ would have assumed different values depending on different possible $c$ s.t. $X_i= c \; \forall \; i \in \mathcal{V}$.
Then, a node similarity measure induced by $D^{-1/2} M_{\mu}D^{-1/2}$ could not satisfy the first axiom, from which follows the thesis.
\end{proof}
Notice that the previous proof relies on the fact that, generally, $M_{\mu}$ does not commute with its normalized counterpart $M'_{\mu} = D^{-1/2}M_{\mu}D^{-1/2}$. Crucially, the global minimum of the node-similarity measures $\mu$ and $\mu'$ is achieved if and only if the vector lies in the kernel of the respective operator: $\mu(\vec{x}) = 0 \iff \vec{x} \in \text{Ker}(M_{\mu}), \quad \text{and} \quad \mu'(\vec{x}) = 0 \iff \vec{x} \in \text{Ker}(M'_{\mu})$. Since the operators do not commute (under the assumption that the graph is connected and non-regular), they are not simultaneously diagonalizable and their kernels define distinct one-dimensional subspaces. Consequently, there exists no vector in $\mathbb{R}^{|\mathcal{V}|} \setminus \{\vec{0}\}$ where $\mu$ and $\mu'$ are simultaneously zero, and the previouses double implications are never simultaneously satisfied in $\mathbb{R}^{|\mathcal{V}|} \setminus \{\vec{0}\}$.
\begin{remark}
   Observe that the earlier results remain valid under the application of a scalar positive semidefinite operation, such as the square root:  \[
   \mu(X):= \sqrt{\text{tr}(X^{\top}M_{\mu}X) } \geq 0
   \] 
   as the preceding proof remains unchanged.
\end{remark}
In particular:
\begin{corollary}
The normalized Laplacian $\Delta_{\text{norm}}$ does not induce a node-similarity measure. 
\begin{proof}
    From \cite{Rusch2023ASO} it's known that $\sqrt{\mathcal{E}_{\Delta}}$ is a node-similarity measure induced by $\Delta$.
    From the previous proposition it follows that $D^{-1/2}\Delta D^{-1/2}= \Delta_{\text{norm}}$ cannot induce a node-similarity measure. 
\end{proof}
\end{corollary}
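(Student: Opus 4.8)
The plan is to obtain this as a direct specialization of the preceding Proposition, so that the entire argument reduces to identifying the correct operator $M_{\mu}$ and verifying a single algebraic identity. Concretely, I would take $M_{\mu} = \Delta$: by the result of \cite{Rusch2023ASO} quoted above, $\sqrt{\mathcal{E}_{\Delta}}(X) = \sqrt{\text{tr}(X^{\top}\Delta X)}$ is a node-similarity measure, so $\Delta$ is a legitimate instance of an operator inducing such a measure. The fact that Rusch's measure carries an outer square root is not an obstruction, since the preceding Remark already extends the Proposition to the square-root version.

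The key step is the observation that $\Delta_{\text{norm}}$ is exactly the symmetric normalized conjugate of $\Delta$. I would compute
\[
D^{-1/2}\Delta D^{-1/2} = D^{-1/2}(D-A)D^{-1/2} = I - D^{-1/2}AD^{-1/2} = \Delta_{\text{norm}},
\]
which places $\Delta_{\text{norm}}$ precisely in the form $D^{-1/2}M_{\mu}D^{-1/2}$ appearing in the Proposition. Applying the Proposition to $M_{\mu}=\Delta$ then yields immediately that $D^{-1/2}\Delta D^{-1/2}=\Delta_{\text{norm}}$ cannot induce a node-similarity measure, which is the claim.

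The only subtlety — and the point I would be most careful about — concerns the scope of the hypotheses rather than any genuine computation. The Proposition requires the graph to be connected and \emph{non-regular}; for a regular graph with $D = dI$, $\Delta_{\text{norm}}$ is proportional to $\Delta$ and does satisfy the axioms, as already noted in the footnote. Hence the corollary must be understood in the generic (non-regular) regime, which is exactly the setting in which the kernels of $\Delta$ and $\Delta_{\text{norm}}$ — spanned by $\mathbf{1}$ and $D^{1/2}\mathbf{1}$ respectively — fail to coincide and the two measures cannot vanish on the same nonzero signal. I would therefore simply inherit the Proposition's connectivity and non-regularity assumptions when stating the corollary, after which no further work remains.
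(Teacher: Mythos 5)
Your proposal is correct and follows essentially the same route as the paper: instantiate the Proposition with $M_{\mu}=\Delta$ (using the Rusch \emph{et al.} result that $\sqrt{\mathcal{E}_{\Delta}}$ is a node-similarity measure, with the Remark covering the square root) and use the identity $D^{-1/2}\Delta D^{-1/2}=\Delta_{\text{norm}}$ to conclude. Your added care about the non-regularity hypothesis and the explicit verification of the conjugation identity are sound refinements of the paper's terser argument, not a different proof.
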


\begin{remark}
    All the previous theorems hold also for the normalized Laplacian augmented with self loops, $\tilde{\Delta}_{\text{norm}}$, since 
    \[
    \tilde{\Delta}_{\text{norm}} = I-(D+I)^{-1/2}(A+I)(D+I)^{-1/2} 
    \]
    \[= (D+I)^{-1/2}\bigl( (D+I)-(A+I)\bigr)(D+I)^{-1/2}=(D+I)^{-1/2}\bigl(D-A)(D+I)^{-1/2}
    \]
    \[
    = \tilde{D}^{-1/2}\Delta \tilde{D}^{-1/2}.
    \]
    in particular
    \[
    \Delta\mathbf{1}= \vec{0}\Rightarrow\tilde{\Delta}_{\text{norm}}\tilde{D}^{1/2}\mathbf{1} =\tilde{D}^{-1/2}\Delta\mathbf{1} = \vec{0}.
    \]
    Hence, all the previous theorems can be shown for $\tilde{\Delta}_{\text{norm}}$ too, applying in the proofs the substitution $d_i \mapsto d_i + 1$ $\forall \; i \in \mathcal{V}$ from which it follows that $D \mapsto D+I = \tilde{D}$ and $A \mapsto A+I = \tilde{A}$.
    
\end{remark}

Hence, analysis of over-smoothing of GCNs augmented with self-loops should be based on the Dirichlet energy definition induced by $\tilde{\Delta}_{\text{norm}}$, since it measures the smoothness with respect to the Laplacian that governs the evolution of graph signals, i.e. the collapse of the graph signals of the features to the kernel of $\tilde{\Delta}_{\text{norm}}$, that represents the condition where node embeddings are a rescaling of $\sqrt{d_i+1}$, i.e., where all the information about data is lost and the embedding matrix only encodes information about the degree-distribution (i.e. losing the ability of distinguishing graphs representing data of different nature but with the same topology). This loss of information is precisely the one that gave raising to the problem of over-smoothing, since it's precisely the kind of smoothness GCNs are prone to: the simplest definition of a layer of a GCN applied to an embedding matrix $X$ is $\sigma(\tilde{A}XW) = \sigma \bigl( (I-\tilde{\Delta}_{\text{norm}})XW \bigr)$ that, when
$\sigma$ is the identity function, gives rise to a low pass filter in a connected graph. A spectral smoothness known and managed well before the first definition of GCN, as shown in the fairing algorithm defined in \cite{Taubin1995ASP}, \cite{466848}, that has a straightforward analysis in the framework of GNNs, as shown in \cite{spectralinterpretation}. For this reason, the Dirichlet energy induced by the normalized Laplacians should be included in a well-posed axiomatic definition of over-smoothing. 
In other words, a problem with Definition 1 is that it's based only on the embedding matrices and it is completely independent from the topology of the graph. Indeed, axiom 1 is only mentioning a condition regarding the loss of information contained on the embedding matrix $X$, meaning that it's imposing a constraint holding for all the topologies having the same embedding matrix $X$, possibly excluding the smoothness arising from a topology dependent node-similarity, as in the case formally analyzed in Proposition 1.
In other words, Definition 1 is formally excluding the spectral smoothness w.r.t. normalized Laplacians from the over-smoothing class of phenomena, hence is too strict. 
Although this is a simple mathematical fact, it is useful to formally and precisely highlight the subtle difference it induces in the interpretation of results, particularly in the context of over-smoothing in GNNs.

\section{Indeterminacy}

Spectral analysis is straightforward when GNN layers act as spectral filters (e.g., polynomials) in the Laplacian eigenvector basis. Yet, as shown in the previous chapter, a consistent Laplacian choice is necessary for unbiased over-smoothing analysis. Here, we consider the evolution of the features matrix of a graph through a GNN where we assume that the composition of its first $k$ layers implements a function $f_k(\Delta_{\text{norm}})$, where $f_k(\Delta_{\text{norm}})$ is a spectral function of $\Delta_{\text{norm}}$, $f_k(\xi)$ is the evaluation of $f_k$ on the scalar $\xi$, and $\overline{f_k(\xi)}$ is the complex conjugate of $f_k(\xi)$. We illustrate how this inconsistency regarding the underlying spectral dynamics could lead to misleading analytical results, evaluating  $f_k(\Delta_{\text{norm}})$ against the standard Dirichlet energy.
The conclusion holds in general for all the functions that do not commute with $\Delta$. Let $\mathcal{B}$ be any orthonormal base of the features space $\mathbb{R}^d$, $\sigma_{\Delta}$ be the spectrum of eigenvalues of $\Delta$, and  consistently for $\sigma_{\Delta_{\text{norm}}}$:
\[
             \mathcal{E}_{\Delta}(X^{(k)}) = \text{tr}(X^{(k)^\top}\Delta X^{(k)})= \text{tr}(X^{(0)^\top}f_k(\Delta_{\text{norm}})\Delta f_k(\Delta_{\text{norm}}) X^{(0)})
            \]
       \[
= \sum_{\omega \in \sigma_{\Delta}} \omega  \sum_{\vec{e} \in \mathcal{B}} \vec{e}^TX^{(0),\top}f_k(\Delta_{\text{norm}}) \vec{\omega}\vec{\omega}^T  f_k(\Delta_{\text{norm}}) X^{(0)} \vec{e}
\]
   \[
= \sum_{\omega \in \sigma_{\Delta}}\sum_{\xi \in \sigma_{\Delta_{\text{norm}}}} \sum_{\xi' \in \sigma_{\Delta_{\text{norm}}}} \omega \overline{f_k(\xi)} f_k(\xi') \sum_{\vec{e} \in \mathcal{B}} \vec{e}^TX^{(0),\top}\vec{\xi}\braket{\xi|\omega}     \braket{\omega|\xi'}\vec{\xi'}^T     X^{(0)} \vec{e}
\]
expressing now $\vec{\xi}$ in the base of $\vec{\omega}$ it's
$\vec{\xi} = I\vec{\xi}=\sum_{\omega \in \sigma_{\Delta}} \vec{\omega}\braket{\omega|\xi}$, and analogously for $\vec{\xi}^T$.  Hence, each projector of shape $\vec{\xi}\vec{\xi}^T$ becomes:
\[
\sum_{\omega \in \sigma_{\Delta}}\sum_{\omega^{\prime} \in \sigma_{\Delta}} \vec{\omega}\braket{\omega|\xi} \braket{\xi|\omega^{\prime}} \vec{\omega}^{\prime T}= \sum_{\omega \in \sigma_{\Delta}}\sum_{\omega^{\prime}\in \sigma_{\Delta}} \braket{\omega|\xi} \braket{\xi|\omega^{\prime}} \vec{\omega}\vec{\omega}^{\prime, T}
\]
substituting in the expression above, keeping in account that $\braket{\omega^{\prime}|\omega}=\delta_{\omega \omega^{'}}$
\[
\mathcal{E}_{\Delta}(X^{(t)})= \sum_{\omega \in \sigma_{\Delta}} \omega \sum_{\omega^{\prime} \in \sigma_{\Delta}} \sum_{\omega^{\prime\prime} \in \sigma_{\Delta}} \sum_{\xi \in \sigma_{\Delta_{\text{norm}}}} \sum_{\xi^{\prime} \in \sigma_{\Delta_{\text{norm}}}}
\]
\[
\overline{f_k(\xi)} f_k(\xi') \sum_{\vec{e} \in \mathcal{B}} \vec{e}^TX^{(0),\top} \vec{\omega^{ \prime}} \braket{\omega^{ \prime}|\xi} \braket{\xi|\omega}  \braket{\omega|\xi^{\prime}}   \braket{\xi^{\prime}|\omega^{\prime \prime} }\vec{\omega}^{\prime\prime,T}   X^{(0)} \vec{e}
\]
\[
= \sum_{\omega \in \sigma_{\Delta}} \omega \sum_{\omega^{\prime} \in \sigma_{\Delta}} \sum_{\omega^{\prime\prime} \in \sigma_{\Delta}} \sum_{\xi \in \sigma_{\Delta_{\text{norm}}}} \sum_{\xi^{\prime} \in \sigma_{\Delta_{\text{norm}}}}
\] 
\[
\braket{\omega^{ \prime}|\xi} \braket{\xi|\omega}  \braket{\omega|\xi^{\prime}}   \braket{\xi^{\prime}|\omega^{\prime \prime} } \overline{f_k(\xi)} f_k(\xi') \sum_{\vec{e} \in \mathcal{B}} \vec{e}^T X^{(0),\top} \vec{\omega}^{ \prime} \vec{\omega}^{\prime\prime T}     X^{(0)} \vec{e} .
\]
This expression shows that the contributions to the Dirichlet energy now depend from the scalars $\braket{\xi|\omega}$ that express the non-commutativity between $\Delta$ and $\Delta_{\text{norm}}$, in particular, \textit{they depend on} $D$.
In fact, those scalars are coefficients of the \textit{superposition matrix} between orthonormal eigen-basis of two in general non-commutable operators, and expresses "how much" they don't commute. The more they commute, the nearest are that coefficients to $\delta$.\\

The previous analysis expresses the following limitation, that is a reformulation of Theorem 1 under that perspective:
\begin{theorem}
If a GNN is defined to implement a spectral function of $\Delta_{\text{norm}}$ then in general it does not implement a spectral function w.r.t  $\Delta$.
\end{theorem}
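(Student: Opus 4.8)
The plan is to read this statement as the operator-algebraic reformulation of Theorem~1 and to reduce it directly to the non-commutativity established there. First I would fix what ``implementing a spectral function w.r.t. $\Delta$'' means: a symmetric operator $M$ is a spectral function of $\Delta$ precisely when it is diagonal in an eigenbasis of $\Delta$, i.e. $M = \sum_{\omega \in \sigma_{\Delta}} g(\omega)\,\vec{\omega}\vec{\omega}^{T}$ for some $g$ defined on $\sigma_{\Delta}$; by Lagrange interpolation over the finite spectrum this is equivalent to $M$ being a polynomial in $\Delta$. The only consequence I will actually need is the easy and always-valid direction: \emph{if $M$ is a spectral function of $\Delta$ then $[M,\Delta] = 0_{|\mathcal{V}| \times |\mathcal{V}|}$}, since every polynomial in $\Delta$ commutes with $\Delta$.

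With this in hand the argument is a short proof by contradiction. Suppose the composed filter $f_k(\Delta_{\text{norm}})$ were also a spectral function of $\Delta$. By the remark above it would commute with $\Delta$, i.e. $[f_k(\Delta_{\text{norm}}),\Delta] = 0_{|\mathcal{V}| \times |\mathcal{V}|}$. I would then exploit that $f_k$ is, generically, injective on the finite set $\sigma_{\Delta_{\text{norm}}}$: in that case there exists a function $g$ with $g(f_k(\xi)) = \xi$ for every $\xi \in \sigma_{\Delta_{\text{norm}}}$, so that $g(f_k(\Delta_{\text{norm}})) = \Delta_{\text{norm}}$, and $g$ can again be realized as a polynomial by interpolation. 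Since a polynomial in an operator $B$ commutes with everything $B$ commutes with, we would obtain $[\Delta_{\text{norm}},\Delta] = [\,g(f_k(\Delta_{\text{norm}})),\Delta\,] = 0_{|\mathcal{V}| \times |\mathcal{V}|}$, contradicting Theorem~1. Hence $f_k(\Delta_{\text{norm}})$ cannot be a spectral function of $\Delta$.

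The subtle part, and what the phrase ``in general'' is carrying, is the genericity hypothesis on $f_k$; this is the step I expect to require the most care. The statement cannot hold for \emph{every} $f_k$: a constant filter $f_k \equiv c$ gives $f_k(\Delta_{\text{norm}}) = cI$, which is trivially a spectral function of $\Delta$, and more generally any $f_k$ that collapses distinct eigenvalues of $\Delta_{\text{norm}}$ escapes the argument, since then the interpolation inverse $g$ no longer exists. I would therefore state the conclusion for the generic class of filters that separate the eigenvalues of $\Delta_{\text{norm}}$ (e.g. any strictly monotone $f_k$, which covers the low-pass filters driving GCN dynamics), and record the two genuine exceptions: the degenerate $f_k$ just described, and the commuting regime of regular graphs, where $\Delta_{\text{norm}} \propto \Delta$ and Theorem~1 fails at its source.

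Finally I would tie the proof back to the explicit computation preceding the statement, as an independent check. The coefficients $\braket{\xi|\omega}$ appearing in the expansion of $\mathcal{E}_{\Delta}(X^{(k)})$ are exactly the entries of the change-of-basis (superposition) matrix between the eigenbases of $\Delta$ and $\Delta_{\text{norm}}$, and they reduce to $\delta_{\xi\omega}$ if and only if the two operators are simultaneously diagonalizable. The surviving off-diagonal overlaps $\braket{\xi|\omega} \neq 0$ for $\xi \neq \omega$ are the concrete witness that $f_k(\Delta_{\text{norm}})$ is not diagonal in the $\Delta$-eigenbasis, giving a computation-based route to the same conclusion and confirming that the non-commutativity of the two Laplacians is precisely the obstruction.
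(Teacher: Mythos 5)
Your proof is correct, and it takes a genuinely different — and more rigorous — route than the paper. The paper never actually writes a proof for this theorem: it presents it as ``a reformulation of Theorem 1 under that perspective,'' where the justification is the preceding expansion of $\mathcal{E}_{\Delta}(X^{(k)})$, whose cross-terms $\braket{\xi|\omega}$ (the superposition matrix between the two eigenbases) show that a filter diagonal in the $\Delta_{\text{norm}}$-eigenbasis scatters across $\Delta$-frequencies, reducing to the diagonal case only for regular graphs. You instead run a clean operator-algebraic contradiction: a spectral function of $\Delta$ is a polynomial in $\Delta$, hence commutes with $\Delta$; under injectivity of $f_k$ on $\sigma_{\Delta_{\text{norm}}}$, Lagrange interpolation produces a polynomial $g$ with $g(f_k(\Delta_{\text{norm}})) = \Delta_{\text{norm}}$, so $[\Delta_{\text{norm}},\Delta] = 0_{|\mathcal{V}|\times|\mathcal{V}|}$, contradicting Theorem 1. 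The inversion step is the genuinely new ingredient: the paper has nothing that closes the logical gap between ``the eigenbases differ'' and ``no function of $\Delta_{\text{norm}}$ is a function of $\Delta$,'' and that implication is in fact false without a hypothesis like yours (a constant filter being the obvious counterexample). Your explicit delineation of the exceptional cases — eigenvalue-collapsing filters and regular graphs — turns the paper's informal ``in general'' into a precise hypothesis, which is a real improvement. What the paper's route buys instead is quantitative content: the superposition coefficients describe \emph{how} the energy disperses across $\Delta$-frequencies, which is exactly what the subsequent over-shrinking discussion relies on; your closing paragraph correctly reconstructs that computation as a consistency check. One small caution on your parenthetical: the canonical GCN filter $f_k(\xi) = (1-\xi)^k$ is not strictly monotone on $[0,2]$ for even $k$ (it identifies $\xi$ with $2-\xi$), so ``strictly monotone'' does not quite cover the motivating example; injectivity on the actual finite spectrum is the right hypothesis, and it holds generically.
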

Only in the case of a regular graph, i.e. when $\Delta_{\text{norm}}=\frac{1}{d}\Delta$, and the superposition matrix equals $I$, it's straightforward to see that the previous reduces to
\[
\mathcal{E}_{\Delta}(X^{(k)})= \sum_{\omega \in \sigma_{\Delta}} \omega\sum_{\omega' \in \sigma_{\Delta}} \sum_{\omega'' \in \sigma_{\Delta}} \overline{f_k\biggl(\frac{\omega'}{d} \biggr)} f_k\biggl(\frac{\omega''}{d}\biggr)   
\delta_{\omega\omega'}   \delta_{\omega\omega''}
\sum_{\vec{e} \in \mathcal{B}} \bra{e}X^{(0),\top}\vec{\omega'}\bra{\omega''}     X^{(0)} \vec{e}
\] 
\[
=\sum_{\omega \in \sigma_{\Delta}}  \bigl| f_k\biggl(\frac{\omega}{d}\biggr) \bigr|^2 \mathcal{E}_{\omega}(X^{(0)}).
\] 
This means that the GNN will behave like a spectral filter $f_k$ w.r.t. $\Delta$ frequencies \textit{only} when processing regular graphs. 
Hence, the frequency band of $\Delta$ that passes depends on the degree distribution of the nodes: for regular graphs, the GNN is the same filter up to a rescaling, while for other graphs, the resulting filter depends on the decomposition of $\Delta_{\text{norm}}$ frequencies in $\Delta$ frequencies, and could vary for different input graphs. Asserting that a GNN \textit{implements} a filter implies that, \textit{for all} processed graphs, it must attenuate or amplify specific frequencies of the signal consistently with respect to a fixed Laplacian definition (e.g., regardless of the degree distribution $D$). However, if the filter function $f$ is uniquely defined on the spectrum of one Laplacian, generally it cannot preserve its spectral behavior with respect to the other, where frequencies are transformed depending on the graph structure. Consider, for instance, the kernel of $\Delta_{\text{norm}}$, which is spanned by the vector $D^{1/2}\mathbf{1}$. In a graph exhibiting high degree variability (containing both high-degree and low-degree nodes), the frequency associated with this vector is zero with respect to $\Delta_{\text{norm}}$. Conversely, this same vector $D^{1/2}\mathbf{1}$ is generally not in the kernel of the unnormalized Laplacian $\Delta$ (which is spanned by $\mathbf{1}$). Instead, it appears as a superposition of various non-zero frequencies with respect to $\Delta$. In this sense, an eigen-signal of one Laplacian, associated with a specific frequency, exhibits spectral dispersion when analyzed with respect to the other. Instead of being frequency-localized, it appears scattered across multiple components of the spectrum of the other.

\subsection{Over-shrinking and Over-smoothing}

Overlooking the formal distinction between the smoothing metrics associated with different Laplacians (e.g., $\mathcal{E}_{\Delta}$ vs. $\mathcal{E}_{\tilde{\Delta}_{\text{norm}}}$) can lead to significant interpretative pitfalls. Consider, for instance, the analysis of GNN dynamics governed by $\tilde{A}_{\text{norm}} = I - \tilde{\Delta}_{\text{norm}}$. A consistent analysis would require employing the associated energy $\mathcal{E}_{\tilde{\Delta}_{\text{norm}}}$; however, relying on the standard Dirichlet energy $\mathcal{E}_{\Delta}$ as an indicator for over-smoothing introduces a fundamental methodological mismatch. As established in the previous sections, $\mathcal{E}_{\Delta}$ measures convergence specifically to the topology-independent subspace $\text{span}(\mathbf{1})$. Since the dynamics of a GCN does not drive the signal towards this state, but rather towards the topology-dependent kernel of $\tilde{\Delta}_{\text{norm}}$, the standard energy $\mathcal{E}_{\Delta}$ is not guaranteed to vanish. Indeed, due to the spectral misalignment, $\mathcal{E}_{\Delta}$ suffers from a leakage on the kernel of $\tilde{\Delta}_{\text{norm}}$, meaning it remains non-zero even as the signal smooths optimally relative to the dynamics. Consequently, observing $\mathcal{E}_{\Delta} \to 0$ in this regime implies a different phenomenon entirely: it requires the collapse of the embedding norms themselves to zero. This effectively conflates over-smoothing with over-shrinking. If one relies on $\mathcal{E}_{\Delta}$ without accounting for this spectral mismatch, the decay of the energy might be misidentified as a consequence of the contractive nature of the network layers, rather than correctly attributing the asymptotic behavior to the alignment with the Laplacian kernel. This distinction underscores the importance of consistency: to avoid misinterpreting spectral behavior, the chosen metric must commute with the operator governing the dynamics.\\
A critical ambiguity arises in the literature when over-smoothing is analyzed through a single scalar metric. For instance, \cite{arroyo2025vanishinggradientsoversmoothingoversquashing} attributes over-smoothing to the contractive nature of GNNs, observing that vanishing gradients drive the energy $\mathcal{E}_{\Delta}$ to zero. However, this perspective conflates two fundamentally different phenomena. To resolve this, we propose a strict decoupling based on systemic universality. We refer to spectral over-smoothing as the regime where, for arbitrary input graphs and generic signals, the output aligns with the low-frequency kernel of the underlying Laplacian while maintaining a strictly positive norm. Conversely, over-shrinking denotes the unconditional collapse of the embedding magnitudes to zero, irrespective of the graph topology or signal initialization. By monitoring only the energy without these universal constraints, one risks misinterpreting the trivial null-solution as spectral convergence.

\section{Numerical Examples}

In this section, we present a series of empirical experiments designed to illustrate the theoretical points discussed in the previous sections. We aim to show that over-smoothing conditions with respect to the normalized Laplacian that nothing have to do with over-shrinking, remain invisible when measured through standard proxies such as the Dirichlet energy or the Frobenius norm of the embeddings. Providing concrete evidence that these states are only identified by the compatible metric allows to disentangle over-smoothing from simple signal decay.\footnote{The source code to reproduce the experiments is available at \url{https://github.com/annabison/dirichlet-oversmoothing-measures}.}

\subsection{Non Over-shrinking GCNs Over-smooth}

We present empirical evidence showing that the information loss resulting from spectral collapse is distinct from the decay measured by the standard Dirichlet energy. Specifically, we isolate regimes where the dynamics are incompatible with $\Delta$ and strictly non-contractive: in these cases, the signal converges to a smooth state with non-zero embedding norms, a phenomenon that standard metrics fail to capture.
To observe the isolated effect of the GCN propagation operator, we conduct at first an experiment focused purely on signal smoothing. This experiment eliminates all learnable parameters (i.e., weight matrices $W$).\\
We select two distinct graphs from the ENZYMES dataset: one graph exhibiting a regular structure and one non-regular graph.
We then simulate a deep linear GCN by iteratively applying only the standard propagation operator for $K=50$ layers. The state at layer $k$ is thus defined by:
\[ 
X^{(k)} = A_{\text{norm}} X^{(k-1)} = A_{\text{norm}}^k X^{(0)}, 
\]
where $A_{\text{norm}} = D^{-1/2}AD^{-1/2}$.\\
At each layer $k \in [0, K]$, we track the same three metrics:
the Frobenius Norm ($\|X^{(k)}\|_F$), the standard Dirichlet energy ($\mathcal{E}_{\Delta}(X^{(k)}) = \frac{1}{|\mathcal{V}|} \mathrm{tr}\!\left( X^{(k),\top} \Delta X^{(k)} \right)$), and the normalized Dirichlet energy ($\mathcal{E}_{\Delta_\text{norm}}(X^{(k)}) = \frac{1}{|\mathcal{V}|} \mathrm{tr}\!\left( X^{(k),\top} \Delta_\text{norm} X^{(k)} \right)$). The objective is to directly compare the convergence behavior and smoothing properties of the $A_\text{norm}$ operator, contrasting its effect on graphs with regular versus non-regular topologies.
Results, computed in double precision, are shown in Figures \ref{fig:overshrinking_1} and \ref{fig:overshrinking_2}: for the non-regular graph, $\|X^{(k)}\|_F$ converges to a non zero value around $10$, while $\mathcal{E}_{\Delta_\text{norm}}$ decreases faster than $\mathcal{E}_{\Delta}$, in line with theoretical expectations that predict a convergence on each dimension to a graph signal $\vec{x} \; | \; \exists \; c \in \mathbb{R} \; |\; x_i =c\sqrt{d_i}$. \\
The plot reflects the nature of low-pass filter of the simple GCN without activations: it is a spectral function of the normalized Laplacian $A_{\text{norm}}= I-\Delta_\text{norm}$, hence having eigenvalues in $[-1,1]$ that dumps after several applications to an embedding matrix $X$, apart that the components that live in the kernel of $\Delta_\text{norm}$ that coincide with the eigen-space of $A_{\text{norm}}$ of eigenvalue $1$. The convergence to the kernel of $\Delta_\text{norm}$ does not imply the collapse of $\|X^{(k)}\|_F$ to zero, as the plot shows.
The regular graph in \ref{fig:overshrinking_2} has $\mathcal{E}_{\Delta_\text{norm}}$ that converges to a constant value, the theoretical expectations are that the two energies should both tend asymptotically to zero, this happens fast and values decrease as fast that after less than 20 layers there is a pattern caused by numerical instabilities. This result shows a way in which over-smoothing can occur without over-shrinking, i.e. without the convergence of the features to zero, highlighting how misleading could be to reduce the former to the latter.  

\begin{figure}[h!]
\centering 

\begin{minipage}{0.49\textwidth}
    \centering
    \includegraphics[width=\textwidth]{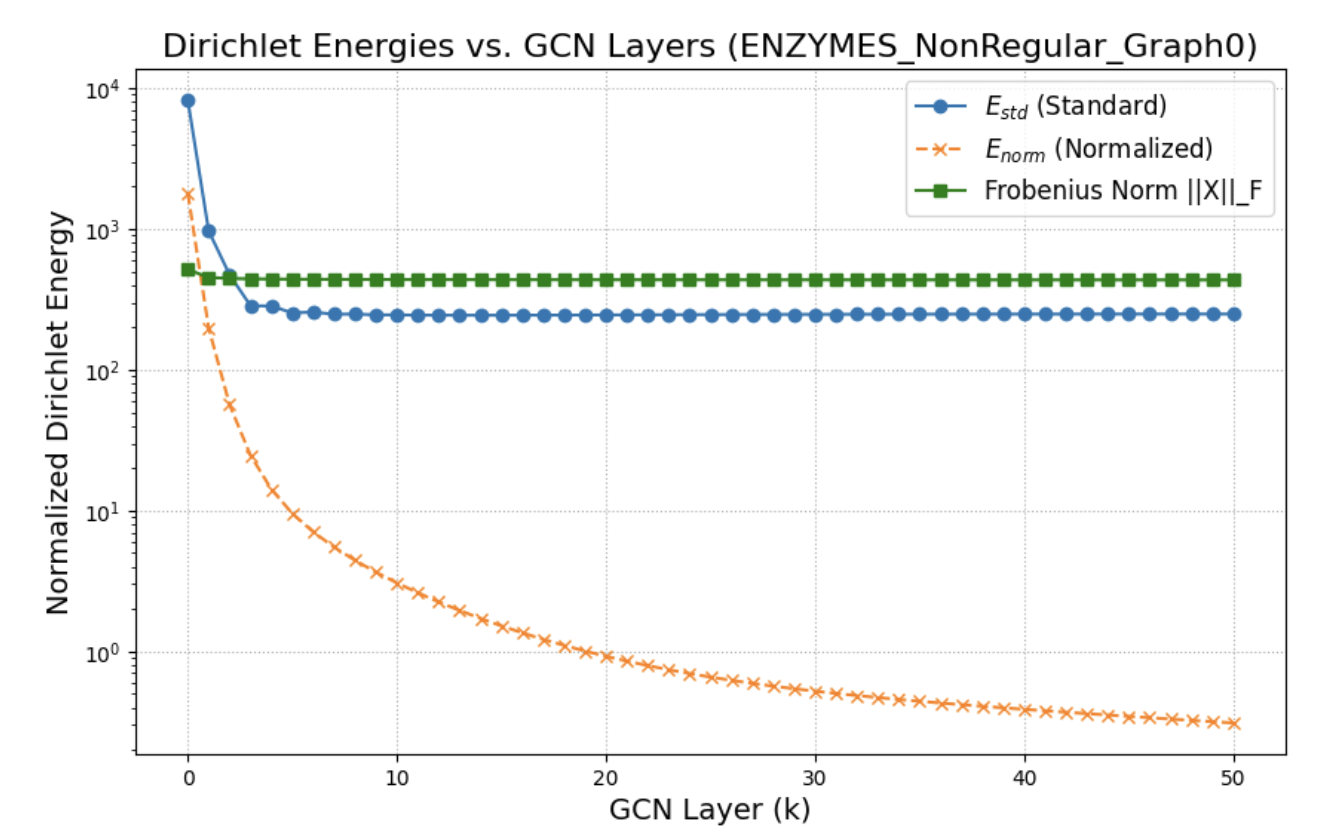} 
    \caption{Evolution of Dirichlet energies in logarithmic scale and the Frobenius norm of embeddings across 50 GCN layers. The experiment is conducted on the first graph from the ENZYMES dataset, a non-regular graph with 37 nodes, an average degree of 4.54, and a degree variance of 0.98.}
    \label{fig:overshrinking_1} 
\end{minipage}
\hfill 
\begin{minipage}{0.49\textwidth}
    \centering
    \includegraphics[width=\textwidth]{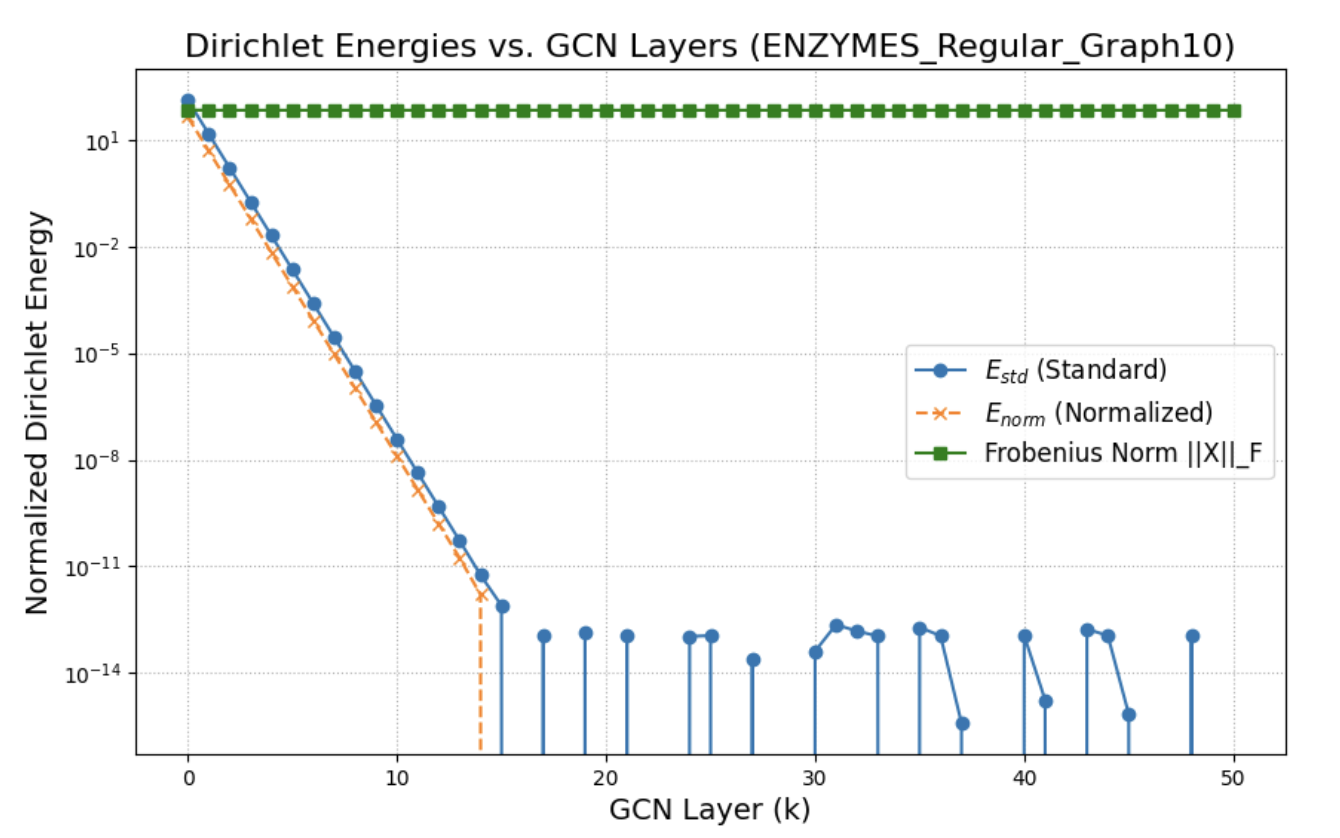} 
    \caption{Evolution of Dirichlet energies in logarithmic scale and the Frobenius norm of embeddings across GCN layers. The experiment is conducted on Graph 10 from the ENZYMES dataset, a regular graph with $4$ nodes of degree $3$.
      } 
    \label{fig:overshrinking_2} 
\end{minipage}

\end{figure}

\subsubsection{LCC of the Cora Dataset}
In the following experiment, we apply the same spectral filter used in the previous computation to a completely different graph: the Largest Connected Component (LCC) of the Cora dataset. We expect to observe the same trend of $\mathcal{E}_{\Delta_\text{norm}}$ as in the previous experiment, as the GCN implements a low-pass filter w.r.t. $\Delta_\text{norm}$ regardless of the specific input signal.
To empirically isolate the smoothing properties of the GCN propagation operator, we use a $K$-layer model structured to separate the initial feature transformation from the subsequent graph-based propagation.\\
In order to reduce the number of input features, rows of $ X^{(0)} \in \mathbb{R}^{|\mathcal{V}| \times m}$ are first projected by a single, bias-free linear layer\footnote{the weight matrix $W \in \mathbb{R}^{F \times C}$ is initialized using the default PyTorch method for `torch.nn.Linear`, which is a Kaiming uniform distribution \cite{He2015}.}: $X^{(1)} = X^{(0)}W$.\\
Subsequent layers $k = 1, \dots, K-1$ apply only the GCN propagation operator, without any trainable weights or non-linearities:
$ X^{(k+1)} = A_{\text{norm}} X^{(k)} $.\\
We execute this experiment on the LCC of the Cora dataset. At each layer $k = 0, \dots, K$, we track the evolution of the Frobenius norm of the embeddings, $\|X^{(k)}\|_F$; the standard Dirichlet energy, $\mathcal{E}_{\Delta}(X^{(k)})$; and the normalized Dirichlet energy, $\mathcal{E}_{\Delta_\text{norm}}(X^{(k)})$.
Results are shown in Figure \ref{fig:CoraNoW}. 

\begin{figure}[h!]
\centering
\includegraphics[width=0.7\textwidth]{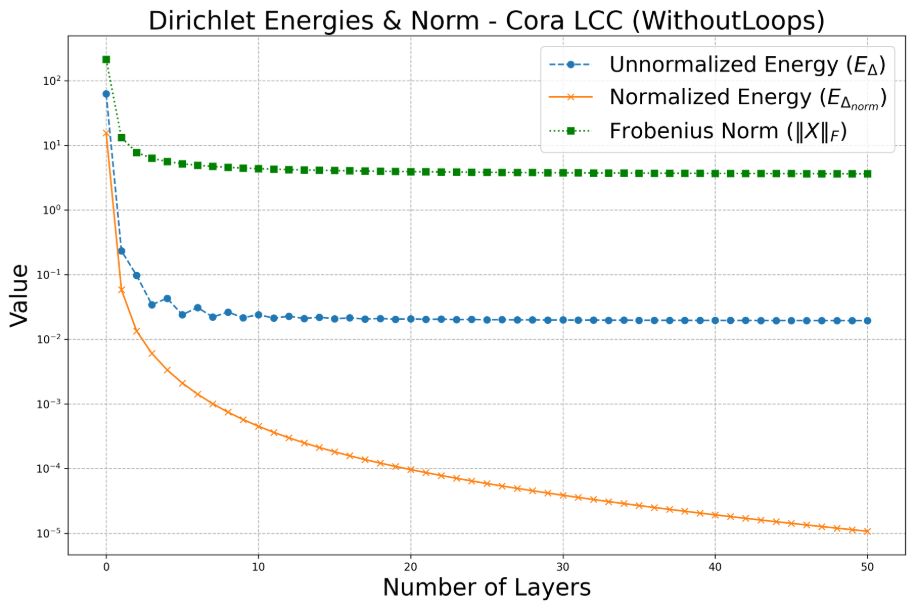}
\caption{Evolution in symmetric logarithmic scale of Dirichlet energy (normalized and unnormalized) and Frobenius norm of the embeddings across 50 layers of a GCN without weight matrices in all the layers apart the first one. The experiment uses a custom GCN model on the Largest Connected Component (LCC) of the Cora dataset. The rapid decay of the normalized Dirichlet energy, highlights the over-smoothing phenomenon, while the convergence of $\|X^{(k)}\|_F$ to a non-zero value excludes over-shrinking.}
\label{fig:CoraNoW}
\end{figure}

\subsubsection{Note on Weight Matrices}

If a graph is given in input to a GCN with all the weight matrices, performing for each layer the operation $X^{(k+1)} = A_{\text{norm}}X^{(k)}W^{(k+1)}$, in general what is obtained is the transformation $X^{(K)} = A_{\text{norm}}^KX^{(0)}W^{(1)}\dots W^{(K)}$, in particular the weight matrices $W^{(l)} \in \mathbb{R}^{m_{l-1} \times m_{l}}$ are different for each layer. The weight matrices $W^{(l)}$ to each layer $l$ of the GNN do not alter the overall effect of low-pass filter given by the operator $\hat{A}$: indeed it's straightforward to see that from associativity
\[ 
X^{(K)} = A_{\text{norm}} X^{(K-1)}W^{(K)} = A_{\text{norm}}^K X^{(0)} W^{(1)}\dots W^{(K)}=A_{\text{norm}}^K X^{\prime(0)}
\]
hence obtaining at the end of the last iteration the same output of applying a GCN without weight matrices for $k$ times to a different initial input $X^{\prime(0)}= X^{(0)} W^{(1)}\dots W^{(K)}$: being a low pass filter, the simple GCN will make graph signals collapse to the kernel of $\Delta_\text{norm}$ independently from the input.

\subsection{Indeterminacy of Spectral Filtering}

The goal of this subsection is to empirically verify that a GNN filter defined as a spectral function of $\Delta_{\text{norm}}$ does not behave as a consistent spectral filter with respect to the frequencies of $\Delta$ on non-regular graphs. A GCN layer implementing the operator $A_{\text{norm}} = I - \Delta_{\text{norm}}$, i.e. a polynomial of the normalized Laplacian, will be applied to a regular graph and to a non regular graph. According to the theoretical results of the previous sections, for a regular graph of average degree $d$ it will be $D = dI$, and hence the ratio among the two Dirichlet energies (for $X$ where it is well defined) will be $\mathcal{E}_{\Delta}(X)/\mathcal{E}_{\Delta_{\text{norm}}}(X)=\frac{\text{tr}(X^T(D-A)X)}{\text{tr}(X^Td^{-1/2}(D-A)d^{-1/2}X)}=d$, i.e. a constant independent from $X$, in particular, the same constant through all the layers of a GCN. Hence, the expected plot of the ratio through the layers of any shape will be then an horizontal line of value $d$.\\
For a non-regular graph, the ratio $\mathcal{E}_{\Delta}(X)/\mathcal{E}_{\Delta_{\text{norm}}}(X)=\frac{\text{tr}(X^T(D-A)X)}{\text{tr}(X^TD^{-1/2}(D-A)D^{-1/2}X)}$ depends on $X$ and evolves through the layers. This behavior is numerically illustrated in Figure \ref{fig:ratio}: for the regular graph (where $d=3$) in \ref{fig:ratioG10}, the ratio remains perfectly constant for the initial iterations. The instability observed after layer $k \approx 13$ is purely a numerical artifact: as verified in Figure \ref{fig:overshrinking_2}, beyond this point both energy values decay to magnitudes close to machine precision limits. Consequently, their ratio becomes ill-conditioned and dominated by floating-point errors.

\begin{figure}[h!]
\centering

\begin{subfigure}{0.32\textwidth}
    \centering
    \includegraphics[width=\textwidth]{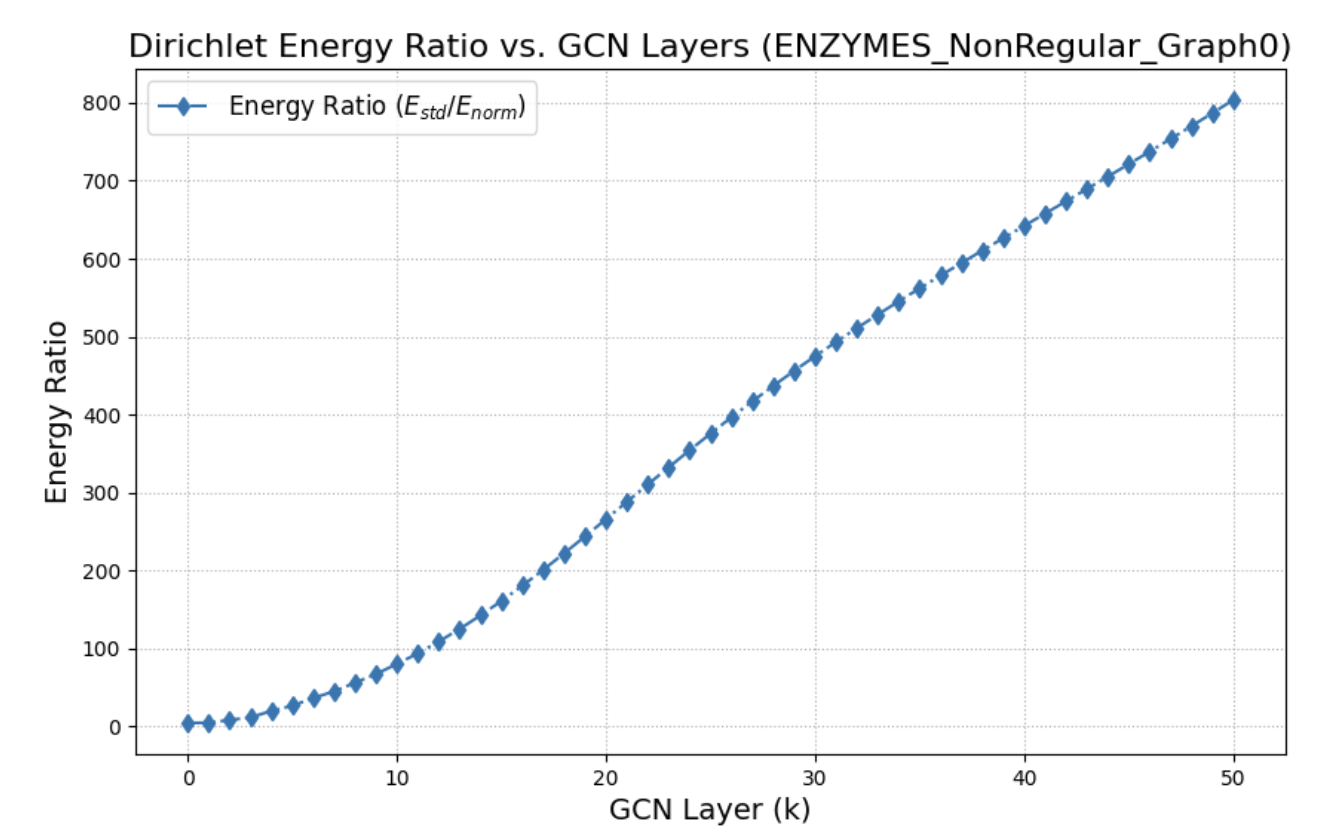}
    \caption{37 nodes, 168 edges, non regular.}
    \label{fig:ratioG0}
\end{subfigure}
\hfill 
\begin{subfigure}{0.32\textwidth}
    \centering
    \includegraphics[width=\textwidth]{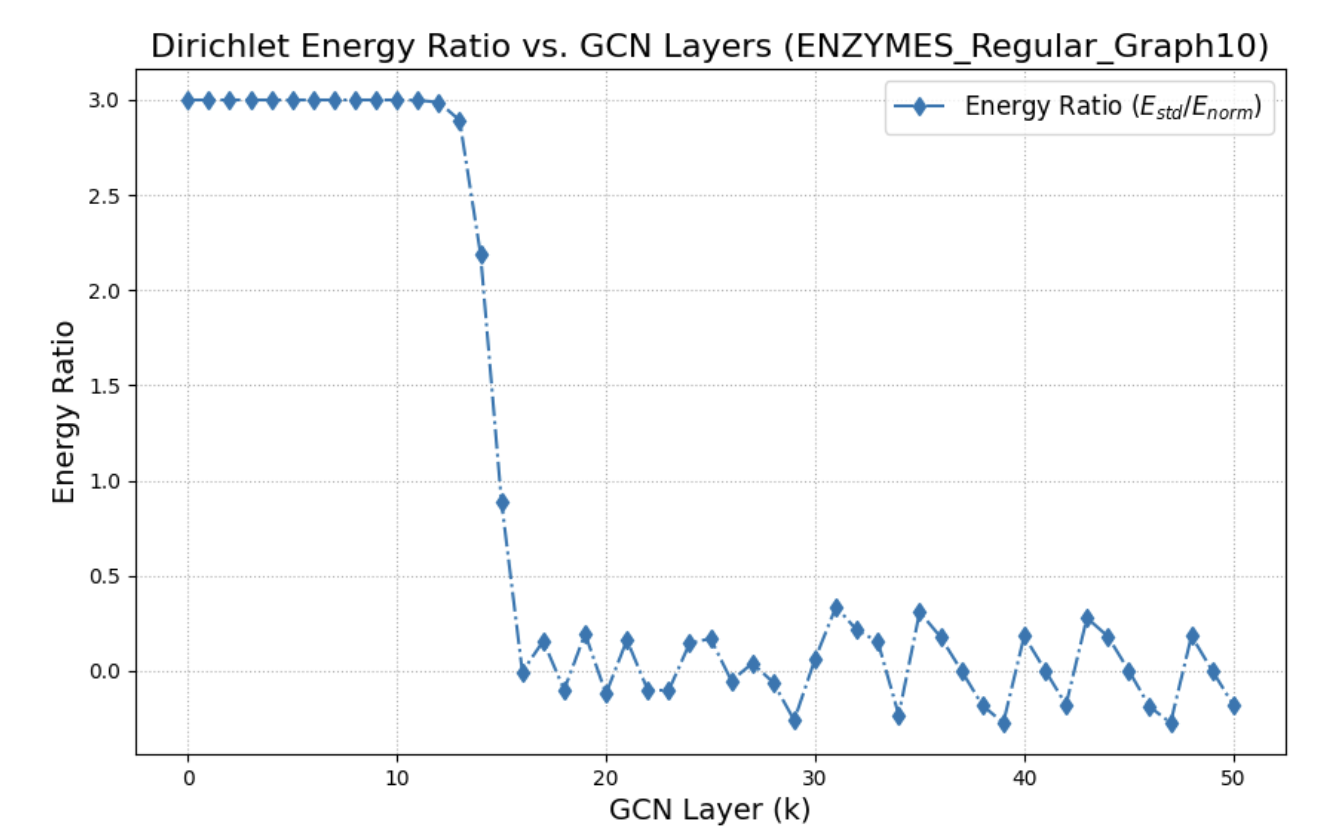}
    \caption{4 nodes, 12 edges, regular.}
    \label{fig:ratioG10}
\end{subfigure}
\hfill 
\begin{subfigure}{0.32\textwidth}
    \centering
    \includegraphics[width=\textwidth]{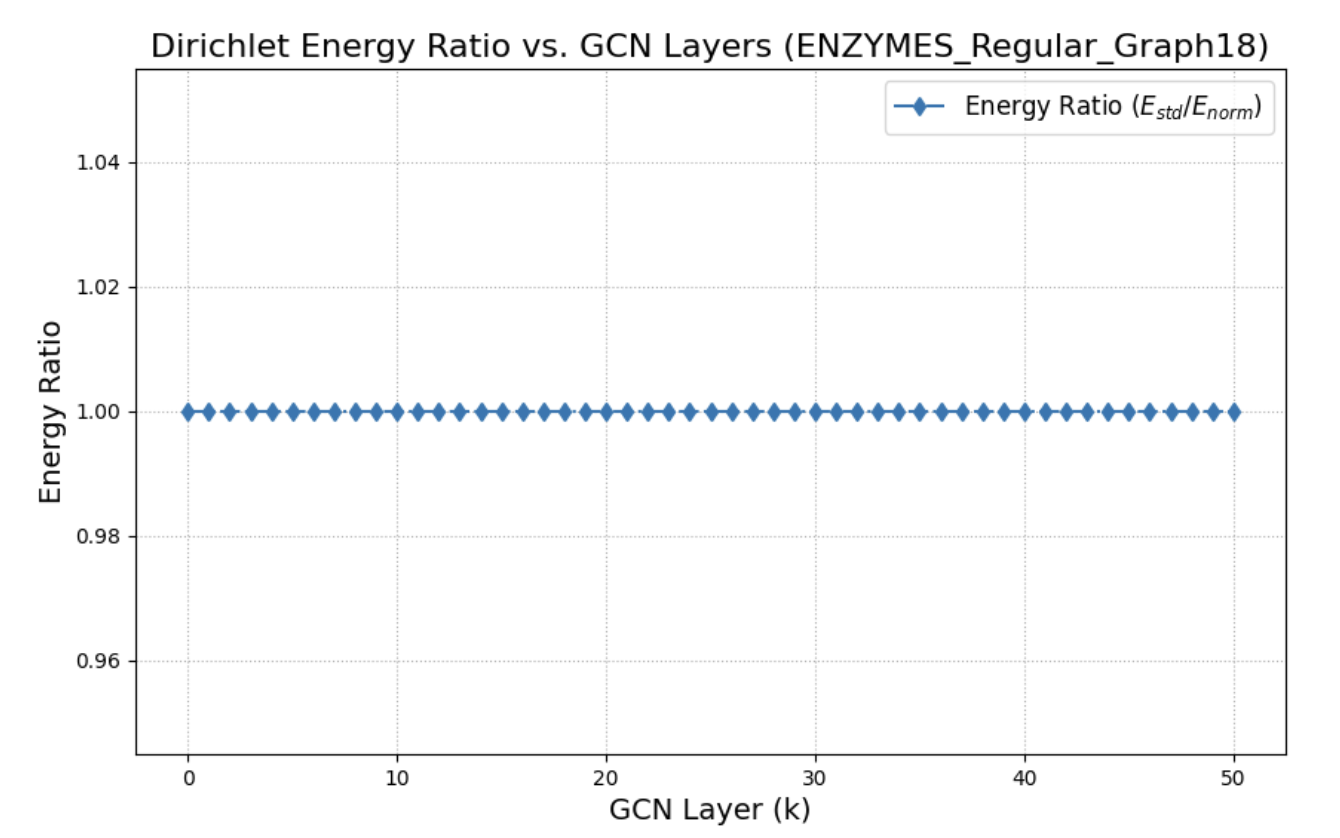}
    \caption{2 nodes, 1 edge, regular.}
    \label{fig:ratioG18}
\end{subfigure}

\caption{Evolution of the ratio $\mathcal{E}_{\Delta}(X)/\mathcal{E}_{\Delta_{\text{norm}}}(X)$ through the layers of a GCN implementing $X^{(k+1)} = A_{\text{norm}}X^{(k)}W^{(k)}$ for two graphs of the dataset ENZYMES.}
\label{fig:ratio}
\end{figure}

\FloatBarrier 

\section{Conclusions}
In this paper, we proposed a formal analysis of the spectral properties of the standard Laplacian $\Delta$ and the normalized Laplacians $\Delta_{\text{norm}}$ and $\tilde{\Delta}_{\text{norm}}$, illustrating their intrinsic differences within the framework of non-commuting operators. We highlighted that overlooking these distinctions can lead to misleading conclusions. Taking this analysis into account is crucial for future attempts to axiomatize over-smoothing, as our work exposed the pathologies inherent in a previous axiomatic proposal. In light of our analysis, general assertions such as those presented in Section 4.1 of \cite{arroyo2025vanishinggradientsoversmoothingoversquashing}, which attribute over-smoothing entirely to artifacts of signal collapse, thereby dismissing the distinct mechanism of convergence to a rank-one subspace, require critical reassessment. We conclude by underscoring that the validity of Dirichlet energy as an over-smoothing measure is strictly limited to GNNs implementing dynamics that are spectrally compatible with the Laplacian inducing the metric.

\section*{Statement on the Use of AI Tools}

The authors acknowledge the use of GitHub Copilot and Google Gemini to assist in the preparation of this manuscript. These tools were employed for generating boilerplate text for the experimental section, drafting and refining the experimental code, and improving the linguistic clarity and flow of the paper. The authors reviewed and edited the generated content and take full responsibility for the accuracy and integrity of the work presented.

\bibliographystyle{unsrt}
\bibliography{bibliography}

\end{document}